\theoremstyle{plain}
\newtheorem{theorem}{Theorem}[section]
\newtheorem*{theorem*}{Theorem}
\newtheorem{lemma}[theorem]{Lemma}
\newtheorem*{lemma*}{Lemma}
\newtheorem{defn}[theorem]{Definition}
\newtheorem{remark}[theorem]{Remark}
\newtheorem{assumptionA}{Assumption}
\newtheorem{assumptionB}{Assumption}
\providecommand{\norm}[1]{\left\lVert#1\right\rVert}
\renewcommand{\hat}{\widehat}
\newcommand{\bfm}[1]{\ensuremath{\mathbf{#1}}}
\def\ba{\bfm a}   \def\bA{\bfm A}   
   \def\bB{\bfm B}   
      \def\cD{{\cal  D}}
\def\bff{\bfm f}     
   \def\bG{\bfm G}   
   \def\bH{\bfm H}   
     \def\II{\mathbb{I}} \def\cI{{\cal  I}}
\def\\bl{\bfm l}      
   \def\bM{\bfm M}   
      \def\cN{{\cal  N}}
     \def\PP{\mathbb{P}} 
      \def\cQ{{\cal  Q}}
     \def\RR{\mathbb{R}}
   \def\bU{\bfm U}   
   \def\bV{\bfm V}   
\def\bx{\bfm x}   \def\bX{\bfm X}   
\def\bz{\bfm z}      
\def\bzero{\bfm 0}
\def\polylog{{\rm PolyLog}}
\def\dl{\dot{\ell}}
\def\ddl{\ddot{\ell}}
\def\dddl{\dddot{\ell}}
\def\ddr{\ddot{r}}
\newcommand{\bfsym}[1]{\ensuremath{\boldsymbol{#1}}}
\def\bbeta{\bfsym \beta}
             \def\bSigma{\bfsym \Sigma}
                \def\hbbeta{\hat{\bfsym \beta}} 
\def\hb{\hbbeta}
\def\hbi{\hbbeta_{/i}}
\def\tbi{\tilde{\bbeta}_{/i}}
\def\tbin{\tbi^{\mathrm{Newton}}}
\def\cItrue{\cI^{\mathrm{True}}}
\def\cIif{\cI^{\mathrm{IF}}}
\def\cIn{\cI^{\mathrm{New}}}
\DeclareMathOperator{\argmin}{argmin}
\DeclareMathOperator{\diag}{diag}
\DeclareMathOperator{\tr}{tr}
\newcommand\numberthis{\addtocounter{equation}{1}\tag{\theequation}}
\icmltitlerunning{Influence in High Dimensions}
\begin{document}

\twocolumn[
\icmltitle{Newfluence: Boosting Model interpretability and Understanding in High Dimensions}




\begin{icmlauthorlist}
\icmlauthor{Haolin Zou}{a}
\icmlauthor{Arnab Auddy}{b}
\icmlauthor{Yongchan Kwon}{a}
\icmlauthor{Kamiar Rahnama Rad}{c}
\icmlauthor{Arian Maleki}{a}
\end{icmlauthorlist}
\icmlaffiliation{a}{Department of Statistics, Columbia University, New York, NY, US.}
\icmlaffiliation{b}{Department of Statistics, The Ohio State University, Columbus, OH, US.}
\icmlaffiliation{c}{Baruch College, The City University of New York, New York, NY, US.}

\icmlcorrespondingauthor{Haolin Zou}{hz2574@columbia.edu}


\vskip 0.3in
]



\printAffiliationsAndNotice{}  

\begin{abstract}

The increasing complexity of machine learning (ML) and artificial intelligence (AI) models has created a pressing need for tools that help scientists, engineers, and policymakers interpret and refine model decisions and predictions. Influence functions, originating from robust statistics, have emerged as a popular approach for this purpose.

However, the heuristic foundations of influence functions rely on low-dimensional assumptions where the number of parameters $p$ is much smaller than the number of observations $n$. In contrast, modern AI models often operate in high-dimensional regimes with large $p$, challenging these assumptions.

In this paper, we examine the accuracy of influence functions in high-dimensional settings. Our theoretical and empirical analyses reveal that influence functions cannot reliably fulfill their intended purpose. We then introduce an alternative approximation, called Newfluence, that maintains similar computational efficiency while offering significantly improved accuracy. 

Newfluence is expected to provide more accurate insights than many existing methods for interpreting complex AI models and diagnosing their issues. Moreover, the high-dimensional framework we develop in this paper can also be applied to analyze other popular techniques, such as Shapley values.
\end{abstract}

\section{Introduction }
\subsection{Background and literature review}\label{ssec:background}

The growing complexity and black-box nature of machine learning (ML) and artificial intelligence (AI) models have made their assessment and interpretation critical challenges, especially when it comes to informed decision-making. Attribution-based techniques, such as influence functions (IF) \citet{pmlr-v70-koh17a, han2020explaining, pruthi2020estimating, yeh2019fidelity, hammoudeh2024training, park2023trak} and Shapley values \cite{ ghorbani2019data, jia2019towards, sundararajan2020many, rozemberczki2022shapley, kwon2022beta, wang2024data}, have emerged as widely used tools for understanding model behavior. 

One of the most widely used attribution-based techniques relies on the IF, a concept originally developed in the field of robust statistics \cite{hampel1974influence}. IFs quantify the effect of small perturbations to individual data points on the predictions of an ML or AI model. IFs have demonstrated promising results in a variety of downstream tasks, including interpreting model predictions \citep{ilyas2022datamodels, grosse2023studying, kwon2024datainf}, improving model alignment \citep{zhang2025correcting, min2025understanding}, and analyzing training dynamics \citep{guu2023simfluence, wang2024capturing}. 

To understand some of the challenges faced by IFs, consider the dataset $\cD = \{\bz_1, \bz_2, \ldots, \bz_n\}$ being used for learning the parameters $\bbeta \in \mathbb{R}^p$ of an AI model. Also assume that for estimating $\bbeta$ we use the empirical risk minimization:
\[
    \hbbeta = \argmin_{\bbeta} L_n(\bbeta):= \sum_{j=1}^{n} \ell(\bbeta, \bz_j),
\]
where  $\ell(\bbeta, \bz)$ denotes the loss function. Using $\hbbeta$ we can make predictions about a new data sample $\bz_0$ and the accuracy of our prediction is measured as $\ell(\hbbeta, \bz_0)$. Hence, we measure the influence of the datapoint $\bz_i$ on the prediction of our model using: 
\begin{equation}\label{eq:trueinf}
    \cI^{\text{True}}(\bz_i,\bz_0):=
    \ell(\hbbeta_{/i},\bz_0)
    -\ell(\hbbeta,\bz_0),
\end{equation}
where $\hbbeta_{/i} := \argmin_{\bbeta} L_{n,/i}(\bbeta):= \sum_{j\neq i}^{n} \ell(\bbeta, \bz_j)$. We call this quantity the \textbf{true} influence of $\bz_i$. 

Calculating $\cI^{\text{True}}(\bz_i,\bz_0)$ requires retraining the model for calculating $\hbbeta_{/i}$. Since this is computationally demanding \citet{pmlr-v70-koh17a} proposed the following approximations. First, using the first-order Taylor approximation, we have
\begin{eqnarray}\label{eq:influence:firstAp}
   \cI^{\text{True}}(\bz_i,\bz_0)
   \approx \nabla_{\bbeta} \ell(\hbbeta, \bz_0)^\top (\hbbeta_{/i} -\hbbeta ) 
\end{eqnarray}
 The second step of approximation is used to calculate $(\hbbeta_{/i} -\hbbeta )$ efficiently. Defining:
 \[
 \widetilde{\bbeta}(\epsilon) := \sum_{j=1}^{n} \ell(\bbeta, \bz_j) - \epsilon \ell(\beta, \bz_i)
 \]
 Note that $\widetilde{\bbeta}(1) = \hbbeta_{/i}$. Since $n$ is a large number, we have:
\begin{equation}\label{eq:influence:secondAp}
\hbbeta_{/i} -\hbbeta  \approx -\left. \frac{d\widetilde{\bbeta}(\epsilon)}{d \epsilon} \right|_{\epsilon=0}. 
\end{equation}
The derivative $\left. \frac{d\widetilde{\bbeta}(\epsilon)}{d \epsilon} \right|_{\epsilon=0}$ can be calculated using the Hessian  of the empirical risk, i.e. $\bG =   \sum_{i=1}^{n} \nabla^2_{\bbeta} \ell(\bbeta, \bz_i)$: 
\begin{equation}\label{eq:hessiancalc}
\frac{d\hbbeta}{d\epsilon} \Big|_{\epsilon=0} = -\bG^{-1} \nabla_{\bbeta} \ell(\hbbeta, \bz_i).
\end{equation}
Combining \eqref{eq:influence:firstAp} and \eqref{eq:hessiancalc}, \citet{pmlr-v70-koh17a} proposed the following approximation for $\cI^{\text{True}}(\bz_i,\bz_0)$:

\begin{equation}\label{eq:influence_final:formula}
\mathcal{I}^{\mathrm{IF}}(\bz_i, \bz_0) =  \nabla_{\bbeta} \ell(\hbbeta, \bz_0)^\top \bG^{-1} \nabla_{\bbeta} \ell(\hbbeta, \bz_i).
\end{equation}

Inspired by the analysis offered in \citet{hampel1974influence} for low dimensional settings, i.e. the setting, where the number of parameters $p$ is much smaller than the number of observations $n$ ($p \ll n$), it is widely believed that the approximations we mentioned in \eqref{eq:influence:firstAp} and \eqref{eq:influence:secondAp} are accurate. However, many modern AI and ML models have many parameters, that challenges the assumption $p\ll n$. Throughout the paper, we call the models in which $p$ is not much smaller than $n$, high-dimensional models. Inspired by such models we would like to answer the following question:

$\mathbf{\mathcal{Q}}_1$: Are the two approximations that led to \eqref{eq:influence_final:formula} accurate under the high-dimensional settings? 

$\mathbf{\mathcal{Q}}_2$: If the answer to $\cQ_1$ is negative, can the formula presented in \eqref{eq:influence_final:formula}  be improved to yield an accurate approximation for $\cI^{\text{True}}(\bz_i,\bz_0)$? 

A few empirical papers have reported the inaccuracies in the conclusions of the IF; 
\citet{pmlr-v70-koh17a} and \citet{bae2022if} empirically showed that IFs are often inaccurate in estimating leave-one-out scores, particularly when applied to deep neural network models.
\citet{basu2020influence} studied how IFs change across different model parameterizations and regularization techniques, showing they can be erroneous in some circumstances. \citet{schioppa2023theoretical} examined the five major sources of inaccuracy and highlighted the potential pitfalls of the Taylor expansion that is commonly used in IFs.

Our goal is to develop a theoretical framework for analyzing the accuracy of IFs. We argue that the high dimensionality of modern AI and ML models undermines the approximations on which IFs are based. To support this claim, we adopt a high-dimensional asymptotic regime where $n, p \to \infty$ with $n/p \to \gamma$, for some fixed $\gamma$ \cite{donoho2011noise, zheng2017does, donoho2016high, el2013robust, sur2019likelihood, li2021minimum}. That is, both $n$ and $p$ are large, but their ratio remains bounded. Our theoretical results show that the answer to $\mathbf{\mathcal{Q}}_1$ is negative, i.e. IFs can be inaccurate in high-dimensional settings. In response, we propose an alternative approximation of $\cI^{\text{True}}(\bz_i, \bz_0)$, called Newfluence, that retains the computational simplicity of classical IFs but remain accurate under high-dimensional conditions. Empirical results further support our theoretical findings. Figure \ref{fig:ALO_IF_scatter_plt} compares the performance of Newfluence with that of $\mathcal{I}^{\mathrm{IF}}(\bz_i, \bz_0)$. 
\begin{figure}[t!]
    \centering
       \includegraphics[width=0.45\textwidth]{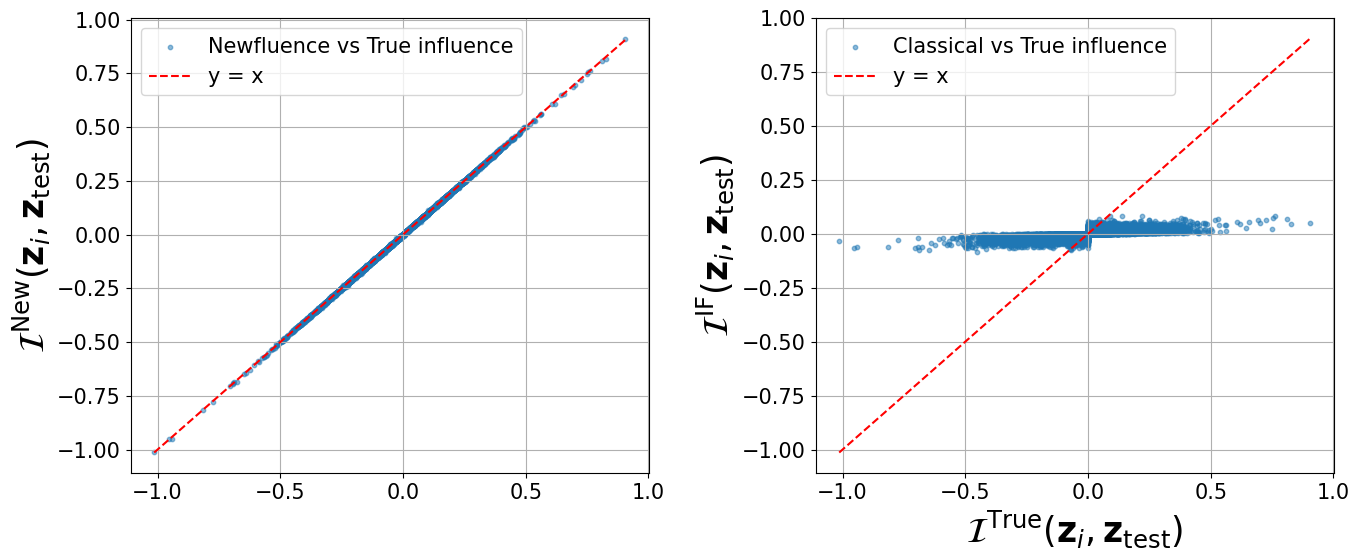}
    \caption{Comparison of Newfluence and $\mathcal{I}^{\mathrm{IF}}(\bz_i, \bz_0)$. We use logistic ridge regression with $n = 500$, $p = 1000$, and $\lambda = 0.01$. The other details of the simulation are presented in Section \ref{sec:exp} The figure shows results for all the influence of all the $n=500$ training points on the prediction loss of $m = 100$ unseen new test points generated from the true logistic model. \textbf{Left:} Newfluence vs. true  influence. \textbf{Right:} $\mathcal{I}^{\mathrm{IF}}(\bz_i, \bz_0)$ vs. true influence. The Newfluence-based method offers a significantly better approximation.}
    \label{fig:ALO_IF_scatter_plt}
\end{figure}
\subsection{Notations}
\label{ssec: notations}
Vectors and matrices are represented with boldfaced lower and upper case letters respectively, e.g. $\ba\in\RR^n,\bX\in\RR^{n\times p}$. 
For matrix $\bX$, $\sigma_{\min}(\bX)$, $\sigma_{\max}(\bX)$ , $\tr(\bX)$ denote its minimum and maximum singular values and trace respectively. 
We denote any polynomials of $\log(n)$ by $\polylog(n)$.

We use classic notations for deterministic and stochastic limit symbols such as $o(\cdot),O(\cdot),o_P(\cdot)$ and $O_P(\cdot)$. In addition, we use the notation $X=\Theta_P(1)$ if $X= O_P(1)$ but not $o_P(1)$, and $X=\Theta_P(a_n)$ if and onlf if $X/a_n=\Theta_P(1)$.


\section{Theoretical results}
\label{sec: theo}
\subsection{Newfluence for High dimensional R-ERM}
\label{ssec: problem}
In this section, we formalize our ideas using the generalized linear model and propose our new measure of influence, called \textbf{Newfluence}. Consider the generalized linear model: $\bz_i = (y_i, \bx_i)$, where $y_i\in\RR$ is the response and $\bx_i\in\RR^p$ is the feature. We further assume that a given dataset $D_n:=\{(y_i,\bx_i)\}_{i=1}^n$ consists of $n$ i.i.d. observations from a generalized linear model, i.e. $(y,\bx)\sim p(\bx)q(y|\bx^\top\bbeta^*)$ , and $\bbeta^*$ represents the parameters that the ML system needs to learn.  

We use the following regularized empirical risk minimization (R-ERM) for estimate $\bbeta^*$:
\vspace{-0.3cm}
\[
    \hbbeta := \underset{\bbeta\in \RR^p}{\argmin} L_n(\bbeta) =\sum_{j=1}^n \ell(y_j,\bx_j^\top\bbeta) + \lambda r(\bbeta)
    \label{eq:hbbeta}\numberthis,
\]
where with a slight abuse of notation, we have redefined the loss function $\ell(y,u)$ as a function of $y$ and $u = \bx_j^\top \bbeta$. Examples of the loss include square loss $\frac12 (y-u)^2$, or negative log-likelihood $-\log q(y|\bx^\top\bbeta=u)$. Furthermore, $r: \RR^p \to \RR$ is the regularizer, e.g. LASSO: $r(\bbeta) = \norm{\bbeta}_1$, ridge: $r(\bbeta) = \norm{\bbeta}_2^2$, and $\lambda>0$ is the strength of regularization. In the rest of the paper, we use simplified notations: for $j \in \{0,1,\dots,n\}$, $\ell_j(\bbeta):=\ell(y_j,\bx_j^\top\bbeta)$, $\dl_j(\bbeta) = \left.\frac{\partial}{\partial u}\ell(y_j,u)\right|_{u=\bx_j^\top\bbeta}$ and $\ddl_j(\bbeta) = \left.\frac{\partial^2}{\partial u^2}\ell(y_j,u)\right|_{u=\bx_j^\top\bbeta}$.
Rewriting \eqref{eq:trueinf} and \eqref{eq:influence_final:formula} under the setting of this section, we obtain:
\[
    \cItrue(\bz_i,\bz_0):=\ell_0(\hbi) - \ell_0(\hb),
\]
\[
    \cIif(\bz_i,\bz_0):=\dl_0(\hb)\bx_0^\top\bG^{-1}(\hb)\bx_i\dl_i(\hb). 
\]
In response to $\mathcal{Q}_1$ in Section \ref{ssec:background}, we study the error $|\cIif(\bz_i,\bz_0) - \cItrue(\bz_i,\bz_0)|$ under the asymptotic setting $n,p \rightarrow \infty$, while $n/p \rightarrow \gamma_0$, where $\gamma_0$ is a fixed (but arbitrary) number. Note that the assumption $n/p \rightarrow \gamma_0$ aims to cover high-dimensional problems. For instance, but choosing $\gamma<1$, we it will even cover the situation where the number of features are less than the number of observations.  

The classical expectation was for the IF approximation $\cIif(\bz_i,\bz_0)$ to closely match the true influence $\cItrue(\bz_i,\bz_0)$, making the difference negligible. However, our theoretical results in the next section show that, contrary to this expectation,
\[
\cIif(\bz_i, \bz_0)=(1-H_{ii})\cItrue (\bz_i, \bz_0)+O_p\left(\frac{\polylog(n)}{n}\right), 
\]
where $H_{ii}=(\bx_i^{\top}\bG^{-1}\bx_i)\ddl_i(\hb)$. As will be clarified later, it follows from the definition of $\bG$ and our high-dimensional framework that $H_{ii}=\Theta_P(1)$ and $0\leq H_{ii} \leq 1$.


This implies that the commonly used $\cIif$ formula \emph{underestimates} the true influence by a datapoint-dependent factor $(1 - H_{ii})$.  Since $1 - H_{ii}$ varies with the datapoint $\bz_i$, it is conceivable that for points with large true influence $\cItrue(\bz_i, \bz_0)$, the value of $1 - H_{ii}$ may be small—causing $\cIif(\bz_i, \bz_0)$ to be markedly lower and incorrectly suggesting that $\bz_i$ is non-influential in the prediction.This underestimation phenomenon is also illustrated in Figure~\ref{fig:ALO_IF_scatter_plt}.

This motivates $\mathcal{Q}_2$ in Section \ref{ssec:background}: can we find a more accurate alternative? In response, we propose a modified influence function framework that addresses this limitation while retaining computational efficiency. First, note that the above calculation suggests the corrected estimator:
\[
\Tilde{\cI}^{\rm IF}
:=\frac{1}{1-H_{ii}}\cIif
\]
which is in fact consistent for $\cItrue$. However, both $\cIif$ and $\Tilde{\cI}^{\rm IF}$ require computing the gradient $\nabla_{\bbeta} \ell(\hbbeta, \bz_0)$ for each new test point $\bz_0$. We avoid this in our proposal called \textbf{Newfluence}, which is given by
\begin{equation}
    \cIn(\bz_i, \bz_0) =  \ell_0\left(\hb +  \frac{       \dl_i(\hb)\bG^{-1}\bx_i}{1-H_{ii}}\right)
    -\ell_0\left(\hb\right).
    \label{eq: def_newfluence}
\end{equation}
Evaluating the loss function at any $\bbeta$ and $\bz_0$ should have similar computational demand as evaluating the gradient, so this method generally does not increase computational complexity. As we shall show in the next section, this estimator is consistent for the true influence $\cItrue$.
%


This alternative approach retains the interpretability advantages of IFs while eliminating unnecessary approximations and improving computational accuracy. This formula is inspired by the recent work on the literature of risk estimation in high-dimensional settings \cite{rad2018scalable, rad2020error, auddy24a, wang2018approximate}. We present the derivation of the formula in Appendix \ref{sec: derive_cIn}. 


\subsection{Our main theoretical contributions: Smooth case}
In this section, we formally state our main theoretical results. Before stating our results, we first review some of the assumptions we have made in our analysis. All these assumptions are mild, and have been shown to be satisfied by a large number of models \cite{rad2018scalable, auddy24a, el2013robust, sur2019likelihood}. 

\begin{assumptionA}\label{assum:separability}
    The regularizer is separable:
    \[
        r(\bbeta) = \sum_{k=1}^pr_k(\beta_k).
    \]
\end{assumptionA}
\begin{assumptionA}\label{assum:smoothness}
Both the loss function $\ell:\RR\times\RR\to\RR_+$ and the regularizer $r:\RR^p\to\RR_+$ are twice differentiable. 
\end{assumptionA}

\begin{assumptionA}\label{assum:convexity}
Both $\ell$ and $r$ are proper convex, and $r$ is $\nu$-strongly convex in $\bbeta$ for some constant $\nu>0$.
\end{assumptionA}

\begin{assumptionA}\label{assum:ld}
    $\exists C,s>0$ such that 
    \[
        \max\{\ell(y,u), |\dl(y,z)|,|\dddl(y,z)| \}\leq C(1+|y|^s + |z|^s)
    \]
    and that $\nabla^2 r(\bbeta) = \diag[\ddr_k(\beta_k)]_{k\in[p]}$ is $C_{rr}(n)$-Lipschitz (in Frobenius norm) in $\bbeta$ for some $C_{rr}(n)=O(\polylog(n))$.
\end{assumptionA}
We also adopt the following assumptions on the data generation mechanism:

\begin{assumptionB}\label{assum:normality}
The feature vectors $\bx_i \overset{iid}{\sim} \mathcal{N} (\bzero, \bSigma)$. Furthermore, $ \lambda_{\max}(\bSigma) \leq \frac{C_X}{p}$, for some constant $C_X>0$. 
\end{assumptionB}

\begin{assumptionB}\label{assum:y}
    $\PP(|y_i|>C_y(n))\leq q_n^{(y)}$ for some $C_y(n)=O(\polylog(n))$ and $q_n^{(y)}=o(n^{-1})$. 
\end{assumptionB}

\begin{theorem}\label{thm: main_smooth} Under assumptions \ref{assum:separability},... \ref{assum:ld}, and \ref{assum:normality},\ref{assum:y},  we have 

1. $|\cIn-\cItrue|=o_P(\frac{1}{n}\polylog(n))$

2. $|\cItrue|=O_P(\frac{1}{\sqrt{n}}\polylog(n)),$

3.  $\cIif=(1-H_{ii})\cItrue+o_P(\frac{1}{n}\polylog(n))$.
\end{theorem}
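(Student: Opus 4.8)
The plan is to work from a single master expansion of the leave-one-out parameter $\hbi$ around $\hbeta$, controlled to second order, and then to push all three claims through that expansion. The key algebraic object is the difference $\bDelta_i := \hbi - \hbeta$. Differentiating the optimality conditions of $L_n$ and $L_{n,/i}$, one gets the exact identity $\nabla L_{n,/i}(\hbi) = 0$ and $\nabla L_n(\hbeta)=0$, whence by subtracting and Taylor-expanding $\nabla L_{n,/i}$ around $\hbeta$ we obtain $\bDelta_i = \bG_{/i}(\hbeta)^{-1}\dl_i(\hbeta)\bx_i + (\text{higher order})$, where $\bG_{/i} = \bG - \ddl_i \bx_i\bx_i^\top$. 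Applying the Sherman--Morrison formula to $\bG_{/i}^{-1}$ then converts this into the Newton-style update $\bDelta_i = \dfrac{\dl_i(\hbeta)\bG^{-1}\bx_i}{1 - H_{ii}} + (\text{higher order})$, which is exactly the argument inside $\ell_0$ in the definition of $\cIn$. Establishing that the ``higher order'' remainder is genuinely $o_P(\polylog(n)/n)$ is the crux: one needs (i) $|\dl_i(\hbeta)| = O_P(\polylog(n))$ and $\bx_i^\top\bG^{-1}\bx_i = \Theta_P(1/n)$ from Assumptions \ref{assum:ld} and \ref{assum:normality} (the eigenvalue bound $\lambda_{\max}(\bSigma)\le C_X/p$ together with $\nu$-strong convexity of $r$ forces $\bG \succeq \lambda\nu\, \mathbf{I}$ and $\|\bG^{-1}\|=O(1)$), giving $\|\bDelta_i\| = O_P(\polylog(n)/\sqrt n)$; and (ii) a uniform third-derivative bound from the $|\dddl|$ growth condition to control the quadratic term in the Taylor remainder of $\nabla L_{n,/i}$, plus the $C_{rr}(n)$-Lipschitz bound on $\nabla^2 r$. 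The standard high-dimensional leverage-score concentration (as in \citet{el2013robust, rad2018scalable, auddy24a}) supplies $\max_i H_{ii} \le 1 - c$ with high probability, so the division by $1-H_{ii}$ is harmless.

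For claim 2, once $\|\bDelta_i\| = O_P(\polylog(n)/\sqrt n)$ is in hand, I expand $\cItrue = \ell_0(\hbi) - \ell_0(\hbeta) = \dl_0(\hbeta)\bx_0^\top\bDelta_i + O(\ddl_0 (\bx_0^\top\bDelta_i)^2)$. Using Assumption \ref{assum:normality}, $\bx_0^\top\bDelta_i = \Theta_P(\|\bDelta_i\|/\sqrt n \cdot \sqrt n)$... more precisely $\bx_0$ is independent of $\bDelta_i$ (it is the fresh test point) with $\Var(\bx_0^\top\bDelta_i \mid \bDelta_i) = \bDelta_i^\top\bSigma\bDelta_i \le (C_X/p)\|\bDelta_i\|^2 = O_P(\polylog(n)^2/(p\sqrt n)^2)\cdot$... collecting factors gives $\bx_0^\top\bDelta_i = O_P(\polylog(n)/\sqrt n)$ after accounting for $\|\bDelta_i\| = O_P(\polylog(n)/\sqrt n)$ and the $1/\sqrt p$ shrinkage from $\bSigma$; combined with $|\dl_0(\hbeta)| = O_P(\polylog(n))$ this yields $|\cItrue| = O_P(\polylog(n)/\sqrt n)$, and the quadratic term is $O_P(\polylog(n)/n)$, hence negligible at this scale.

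For claim 3, I start from $\cIif = \dl_0(\hbeta)\bx_0^\top\bG^{-1}\bx_i\,\dl_i(\hbeta)$ and compare it to the linearization of $\cItrue$. Writing $\cItrue = \dl_0(\hbeta)\bx_0^\top\bDelta_i + O_P(\polylog(n)/n)$ and substituting $\bDelta_i = \dfrac{\dl_i(\hbeta)\bG^{-1}\bx_i}{1-H_{ii}} + o_P(\polylog(n)/n)$ from the master expansion, I get $\cItrue = \dfrac{\dl_0(\hbeta)\,\dl_i(\hbeta)\,\bx_0^\top\bG^{-1}\bx_i}{1-H_{ii}} + o_P(\polylog(n)/n) = \dfrac{\cIif}{1-H_{ii}} + o_P(\polylog(n)/n)$; here I must check that the error term $\dl_0(\hbeta)\bx_0^\top\cdot o_P(\polylog(n)/n)$ really is $o_P(\polylog(n)/n)$, which follows because $\bx_0^\top(\cdot)$ contributes only an extra $O_P(1)$ factor given the $1/\sqrt p$ scaling of $\bx_0$ against a generic $p$-vector of the relevant size — this is the one place to be careful that the remainder's direction is not adversarially aligned with $\bx_0$, handled again by the independence of $\bx_0$. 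Multiplying through by $(1-H_{ii}) = \Theta_P(1)$ gives claim 3. The main obstacle throughout is the uniform (over $i \in [n]$) control of the second-order Taylor remainder in the leave-one-out expansion: one needs the remainder to be $o_P(\polylog(n)/n)$ not just for a fixed $i$ but after whatever union bound or maximal inequality the downstream application requires, and this is where the precise $\polylog$ bookkeeping in Assumptions \ref{assum:ld}, \ref{assum:normality}, and \ref{assum:y} — especially the $q_n^{(y)} = o(n^{-1})$ tail on the responses — is doing the real work.
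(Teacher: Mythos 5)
Your proposal follows the same structural route as the paper: approximate $\hbi$ by a single Newton step from $\hb$, use Sherman--Morrison/Woodbury to turn $\bG_{/i}^{-1}$ into the $\bG^{-1}\bx_i/(1-H_{ii})$ form, Taylor-expand $\ell_0$, and use the independence of the fresh test point $\bx_0$ from the training data. However, there are genuine problems. First, your rate bookkeeping is inconsistent with the normalization of Assumption \ref{assum:normality} and, taken literally, contradicts the phenomenon the theorem is about. With $\lambda_{\max}(\bSigma)\le C_X/p$ one has $\|\bx_i\|=O_P(1)$ and $\|\bG^{-1}\|=O(1)$, so $\bx_i^{\top}\bG^{-1}\bx_i=\Theta_P(1)$, not $\Theta_P(1/n)$; indeed $H_{ii}=\Theta_P(1)$ is exactly why $\cIif$ is biased, and your own use of $\max_i H_{ii}\le 1-c$ is incompatible with your $\Theta_P(1/n)$ claim. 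Likewise $\|\bDelta_i\|=\|\hbi-\hb\|$ is not $O_P(\polylog(n)/\sqrt{n})$ in this scaling; the Newton step $\dl_i(\hb)\bG^{-1}\bx_i/(1-H_{ii})$ has norm $O_P(\polylog(n))$, and the $1/\sqrt{n}$ in Parts 1--2 comes only from projecting onto the independent direction $\bx_0$, via $|\bx_0^{\top}\bv|=O_P(\|\bv\|\sqrt{\log n/p})$. If one follows your stated magnitudes ($\|\bDelta_i\|=O_P(\polylog(n)/\sqrt{n})$ and then an additional $1/\sqrt{p}$ shrinkage), Part 2 would come out as $O_P(\polylog(n)/n)$, contradicting the claim that $\cItrue$ is of order $\polylog(n)/\sqrt{n}$ (and typically exactly of that order).

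Second, the decisive estimate is missing rather than proved: everything hinges on the accuracy of the one-step Newton approximation, namely $\|\hbi-\tbin\|=o_P(\polylog(n)/\sqrt{n})$ in Euclidean norm, which after the $\bx_0$ projection yields the $o_P(\polylog(n)/n)$ in Part 1. Your sketch (``Taylor-expand the leave-one-out optimality condition and control the quadratic remainder via the $|\dddl|$ growth bound and the Lipschitz Hessian of $r$'') identifies the right ingredients but is precisely the hard part, and you also set the wrong target for it: you ask for a parameter-space remainder of size $o_P(\polylog(n)/n)$, which is both stronger than needed and stronger than what is available, while simultaneously claiming the $\bx_0$ projection only contributes an $O_P(1)$ factor (it contributes $O_P(\sqrt{\log n/p})$, which is where the extra $1/\sqrt{n}$ actually comes from). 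The paper does not re-derive this bound either; it imports $\|\hbi-\tbin\|=o_P(\polylog(n)/\sqrt{n})$ from Lemma 3.3 of \citet{zou2025certified}. Once that bound is in hand, your treatment of claims 2 and 3 (Taylor-expanding around $\hb$ and observing $\cIif=\dl_0(\hb)\dl_i(\hb)\bx_0^{\top}\bG^{-1}\bx_i$ in the GLM setting) matches the paper's argument, provided the scalings above are corrected.
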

We present 
the proof 
in Appendix \ref{sec: heuristic}. 
We now present a few remarks on the above theorem.


\begin{remark}
    For many common choices of loss functions and regularization parameters, $\ddl_i(\hbbeta)=\Theta_P(1)$, and $\max_j\Ddot{r}_j(\hbbeta_j)=O_p(1)$. Then it follows from the definition of $H_{ii}$ and our assumptions that $H_{ii}=\Theta_P(1/(1+\lambda))$. Thus the classical estimator $\cIif$ incurs considerable bias in such situations, when $\lambda$ is not very large. This is reflected in our simulation experiments in the next section.
\end{remark}

\begin{remark}
Comparing $\cItrue$ and $\cIif $, we find that the approximation error in $\cIif$ is $H_{ii}\cdot\cItrue$, which, for moderate regularization strength $\lambda$, is of the same order as the true influence itself. As a result, a highly influential data point may appear non-influential due to approximation errors involved in obtaining $|\cIif|$, potentially leading to misleading conclusions—an issue also noted in prior empirical studies \citep{basu2020influence, bae2022if}.
\end{remark}

\begin{remark}
In contrast to $\cIif$, the error $|\cIn - \cItrue|$ is much smaller than $|\cItrue|$, indicating that $\cIn$ provides a reliable approximation of $\cItrue$ when $n$ and $p$ are large.
\end{remark}


\section{Numerical Experiments}\label{sec:exp}

We evaluate the accuracy of Newfluence and classical influence function in $\ell_2$-regularized logistic regression. We generate synthetic binary classification datasets with feature dimensions $p \in \{500, 1000, 2000\}$ and set the sample size to maintain a fixed ratio $n/p = 0.5$. The true model coefficients $\bbeta^\star \sim \mathcal{N}(\mathbf{0}, \mathbf{I}_p)$ are drawn from a standard normal distribution, and input features are sampled as $\bx \sim \mathcal{N}(\mathbf{0}, \mathbf{I}_p / n)$. Labels $y \in \{0, 1\}^n$ are generated according to a Bernoulli model with success probabilities given by $\sigma(\bx^\top \bbeta^\star)$, where $\sigma$ denotes the sigmoid function. The code used to produce the numerical results is available online\footnote{\url{https://anonymous.4open.science/r/corrected-influence-functions-2F7E/}}.

We fit a logistic ridge model using Newton's method, with regularization parameter $\lambda \in \{0.01, 10\}$. For each model, we compute the true influence function $\cItrue$ for $m = 100$ unseen test examples. These true influence values are compared against two first-order approximations: (i) \textbf{Newfluence}, and (ii) the classical influence function, $\cIif$. For each test point, we evaluate the rank correlation between true and approximated influence across training points using Kendall’s $\tau$.

The value of $\tau$ lies in the interval $[-1, 1]$, with $\tau = 1$ indicating perfect agreement between rankings, $\tau = -1$ indicating perfect disagreement (reversed order), and $\tau = 0$ indicating no correlation in pairwise orderings.

Results are summarized in Tables~\ref{tab:kendall_compact} and~\ref{tab:kendall_compact2}. When $\lambda = 0.01$, corresponding to an effective degrees of freedom ratio $\text{df}/p \approx 0.344$ (where df$:= \sum_{i=1}^n H_{ii}$), Newfluence estimates exhibit nearly perfect rank agreement with the true influence ranking ($\tau \approx 1.00$), while classical IFs are notably less accurate ($\tau \approx 0.88$). In contrast, when $\lambda = 10$, the regularization is strong and the effective degrees of freedom is substantially smaller ($\text{df}/p \approx 0.023$), placing the model in a low-dimensional regime. In this setting, both Newfluence and the classical influence estimates achieve perfect agreement with the true influence ranking.

These results illustrate that classical IFs can be accurate in the low-dimensional regime—i.e., when $\text{df}/p \ll 1$—but fail to match the fidelity of Newfluence in high-dimensional settings where the model complexity (as measured by $\text{df}/p$) is non-negligible.

\section{Concluding Remarks}

Interpreting today’s black-box systems—ranging from foundation models to the latent “world models” that power model-based reinforcement learning—requires attribution tools whose guarantees scale with model dimensionality. In this work we have shown that the classical influence-function approximation of \cite{pmlr-v70-koh17a} incurs a systematic, datapoint-specific bias in high dimensions: it underestimates the true leave-one-out effect by a factor $1-H_{ii}$. To remedy this, we proposed \textsc{Newfluence}, a single-Newton-step correction that preserves the computational economy of influence functions while eliminating their high-dimensional bias. Our theory establishes consistency under the proportional asymptotics $p\asymp n$, and experiments with high-dimensional logistic ridge regression confirm near-perfect rank agreement between \textsc{Newfluence} and ground truth, whereas the classical estimator degrades markedly. 

Although our theoretical analysis presently targets generalized linear models with strongly convex, twice-differentiable objectives, the Newton-step correction 
is model-agnostic. We therefore view \textsc{Newfluence} as a first step toward principled influence estimation in the non-convex, non-smooth, and sequential settings that characterize modern AI models. 


We hope that recognizing and correcting the high-dimensional bias documented here will encourage the community to reassess existing interpretability tools and to design influence-aware debugging, data-valuation, and alignment pipelines that scale with the complexity of contemporary AI systems.

\begin{table}[t]
\centering
\caption{Kendall's $\tau$ (std in parentheses) for Newfluence and classical influence-based estimates across different training set sizes 
computed over  $m=100$ unseen test data points 
for logistic ridge when $\lambda=0.01$, leading to $\text{df}/p=0.344$.}
\label{tab:kendall_compact}
\begin{tabular}{rrrr}
\toprule
$n$ & $p$ & $\tau$ (Newfluence) & $\tau$ (IF) \\
\midrule
250 & 500 & 0.99 (0.00) & 0.88 (0.01) \\
500 & 1000 & 1.00 (0.00) & 0.88 (0.01) \\
1000 & 2000 & 1.00 (0.00) & 0.88 (0.00) \\
\bottomrule
\end{tabular}
\end{table}

\begin{table}[t]
\centering
\caption{Kendall's $\tau$ (std in parentheses) for Newfluence and classical influence-based estimates across different training set sizes 
computed over  $m=100$ unseen test data points 
for logistic ridge when $\lambda=10.00$, leading to $\text{df}/p=0.023$.}
\label{tab:kendall_compact2}
\begin{tabular}{rrrr}
\toprule
$n$ & $p$ & $\tau$ (Newfluence) & $\tau$ (IF) \\
\midrule
250 & 500 & 1.00 (0.00) & 1.00 (0.00) \\
500 & 1000 & 1.00 (0.00) & 1.00 (0.00) \\
1000 & 2000 & 1.00 (0.00) & 1.00 (0.00) \\
\bottomrule
\end{tabular}
\end{table}

\bibliography{reference}
\bibliographystyle{icml2025}

\newpage
\appendix
\onecolumn

\section{Derivation of the Newfluence for smooth problems}\label{sec: derive_cIn}
Here we provide the detailed derivation of $\tbin$. The key is to use one step of Newton method to get an approximation of $\hbi$, and using Woodbury formula to reduce computational complexity of the leave-one-out Hessian.

Recall that the Newton method, or Newton-Raphson method, is an iterative algorithm to find the root of a function. 
\begin{defn}[Newton-Raphson Method]
\label{def: Newton}
    Given a function $\bff: \RR^p\to\RR^p$ with a unique root $\bx^*\in\RR^p$, the Newton-Raphson Method finds $\bx^*$ iteratively, starting from an initial point $\bx^{(0)}$:
    \[
        \bx^{(t)} := \bx^{(t-1)} - \bG^{-1}(\bx^{(t-1)})\bff(\bx^{(t-1)}),
    \]
    where $\bG(\bx)$ is the Jacobian of $f$, which is assumed to exist and invertible.
\end{defn}

Note that when the objective function $L_{n,/i}(\bbeta)$ is smooth, $\hbi$ is the root of its gradient:
\[
    \boldsymbol{0}=\nabla L_{n,/i}(\hbi) = \sum_{j\neq i}\dl_j(\hbi)\bx_j + \lambda \nabla r(\hbi).
\]

If we replace $\bff$ in Definition \ref{def: Newton}, then its Jacobian is just the Hessian of $L_{n,/i}$:
\begin{equation}\label{eq:def-G-minus-i}
        \bG_{/i}(\hb) = \sum_{j\neq i} \bx_j\bx_j^\top\ddl_j(\hb) + \lambda \nabla^2 r(\hb).    
\end{equation}
Moreover, since $L_{n,/i}=L_n - \ell_i$ and $\nabla L_n(\hb)=0$, we have
\[
\nabla L_{n,/i}(\hb) = \nabla L_n(\hb) - \nabla \ell_i(\hb) = - \ell_i(\hb)\bx_i.
\]

Inspired by the corresponding literature (e.g. \cite{rad2018scalable,rad2020error,auddy24a}), we initiate the Newton iteration at the full model parameter $\hb$ and apply \textbf{a single update}, and call the result $\tbin$:
\begin{align*}
    \tbin &:= \hb - \bG_{/i}(\hb)\nabla L_{n,/i}(\hb)\\
    &= \hb + \dl_i(\hb)\bG_{/i}(\hb)\bx_i. 
    \label{eq: def_tbin_appendix}\numberthis
\end{align*}
Furthermore, we use Lemma \ref{lem:woodberry} to simplify the calculation of $\bG_{/i}$ without repeatedly taking inverse for each $i$. For now we write $\bG, \bG_{/i}$ and drop their dependence on $\hb$:
\begin{align*}
    \bG_{/i}^{-1} &= (\bG - \nabla^2 \ell_i(\hb))^{-1}\\
    &= (\bG - \bx_i\bx_i^\top\ddl_i(\hb))^{-1}\\
    (\text{ Woodbury Formula }) \quad 
    &= \bG^{-1} + \frac{\bG^{-1}\bx_i^\top\bx_i\bG^{-1} \ddl_i(\hb)}{1-\bx_i^\top\bG^{-1}\bx_i\ddl_i(\hb)}.
\end{align*}
Inserting this back into \eqref{eq: def_tbin_appendix}:
\begin{align*}\label{eq:beta-newton-woodbury}
    \tbin &= \hb + \dl_i(\hb)\bG_{/i}(\hb)\bx_i\\
    &=   \hb + \dl_i(\hb) 
    \left[
        \bG^{-1} + \frac{\bG^{-1}\bx_i^\top\bx_i\bG^{-1} \ddl_i(\hb)}{1-\bx_i^\top\bG^{-1}\bx_i\ddl_i(\hb)}
    \right]
    \bx_i\\
    &= \hb + \dl_i(\hb)\bG^{-1}\bx_i\frac{1}{1-H_{ii}}\numberthis
\end{align*}
where $H_{ii}$ is the $(i,i)$ element of 
\begin{align*}
    \bH = \bX\bG^{-1}\bX^\top \diag[\ddl_i(\hb)]_{i=1}^n,
\end{align*}
and $\diag[\ddl_i(\hb)]_{i=1}^n$ is the diagonal matrix with diagonal elements being $\{\ddl_i(\hb), i=1,2,...,n\}$.
Replacing $\hbi$ by $\tbin$ in the definition of the true influence $\cItrue$ yields our definition of $\cIn$:
\begin{align*}
    \cIn &= \ell_0(\tbin)-\ell_0(\hb)\\
    &= \ell_0\left(\hb + \dl_i(\hb)\bG_{/i}(\hb)\bx_i\frac{1}{1-H_{ii}}\right)
    -\ell_0\left(\hb\right)
\end{align*}

\subsection{A Heuristic Lower Bound on $H_{ii}$}
We show that $H_{ii}$ is bounded below by a constant with non-vanishing probability in the case of linear regression, i.e., an ERM with squared loss.
\begin{align*}
    H_{ii} &= \bx_i^\top\bG^{-1}(\hb)\bx_i\\
    &\geq \|\bx_i\|^2 \sigma_{\min}(\bG^{-1}(\hb))\\
    &\geq \|\bx_i\|^2 (\sigma_{\max}(\bG(\hb)))^{-1}.
\end{align*}
By standard concentration results for $\chi^2$ distribution (Lemma \ref{lem:chi:sq:ind}), we know that $\PP(\|\bx_i\|^2>\frac12)\geq 1-e^{-\frac{1}{16}p}$. Also, we have
\begin{align*}
    \sigma_{\max}(\bG(\hb))
    =&~ \|\bX^\top\bX + \lambda\nabla^2 r(\bbeta)\|\\
    \leq&~ \|\bX\|^2 + \lambda \|\nabla^2 r(\bbeta)\|.
\end{align*}
 By Lemma \ref{lem:conc_normal_mat}, $\|\bX\|=O_P(1)$. We claim without proof that, for most commonly used regularizers we have $\|\nabla^2 r(\bbeta)\|=O_P(1)$. For example,  for the ridge penalty $r(\bbeta)=\|\bbeta\|^2$ we have $\|\nabla^2 r(\bbeta)\|= 2$. Therefore $H_{ii}>C$ w.p $\to 1$.

\section{Proof of Theorem \ref{thm: main_smooth}}\label{sec: heuristic}

\subsection{Proof of Part 1}
It follows from Lemma 3.3 of \cite{zou2025certified} (with $m=t=1$) that $\|\hbi-\tbin \| = o_P(\frac{1}{\sqrt{n}}\polylog(n))$. Thus
\begin{align*}\label{eq:newfluence-diff}
    \cIn-\cItrue =&~ \ell_0(\tbin)-\ell_0(\hbi)\\
    =&~\ell(y_0,\bx_0^{\top}\tbin)-\ell(y_0,\bx_0^{\top}\hbi)\\
    =&~\left(
    \int_0^1
    \nabla_{\bbeta}\ell(y_0,\bx_0^{\top}(\tbin+t(\tbin-\hbi)))dt
    \right)^{\top}(\tbin-\hbi)\\
    =&~\left(
    \int_0^1 \dl_0(\tbin+t(\tbin-\hbi))dt
    \right)\bx_0^{\top}(\tbin-\hbi)\\
    \le&~
    C
    \left(
    1+|y_0|^s+|\bx_0^{\top}\tbin|^s+|\bx_0^{\top}\hbi|^s\right)
    \bx_0^{\top}(\tbin-\hbi)\\
    \le&~
    4C\sqrt{C_X}
    \left(
    1+(C_y(n))^s+(3\|\hbi\|\sqrt{C_X\log(n)/p})^s\right)
    \|\tbin-\hbi\|\sqrt{\log(n)/p}\\
    =&~
    O_p\left(
    \frac{\polylog(n)}{n}
    \right)\numberthis
\end{align*}
with high probability. Here the first inequality follows from Assumption~\ref{assum:ld}. The second inequality holds with high probability by Assumption~\ref{assum:y} and by Lemma~\ref{lem:chi:sq:ind}, since $\bx_0\sim\cN(\mathbf{0},\bSigma)$ with $\lambda_{\max}(\bSigma)\le C_X/p$, and $\bx_0$ is independent of $\{(y_i,\bx_i):1\le i\le n\}$, and hence of $\hbi,\tbin$. In the last step we use the fact that $\|\hbi\|=O_{P}(\sqrt{p})$, and finally the above quoted bound on the error of the Newton step, i.e., $\|\hbi-\tbin \| = o_P(\frac{1}{\sqrt{n}}\polylog(n))$. 

\subsection{Proof of Part 2}

Next, note that by definition of $\cIn$, we have by a Taylor series expansion that
\begin{align*}\label{eq:newfluence-alt}
    \cIn-\frac{\dl_0(\hbbeta)\dl_i(\hbbeta)\bx_0^{\top}\bG^{-1}\bx_i}{1-H_{ii}}
    =&~
    \ell_0\left(\hbbeta+\frac{\dl_i(\hbbeta)\bG^{-1}\bx_i}{1-H_{ii}}\right)
    -\ell_0(\hbbeta)-\frac{\dl_0(\hbbeta)\dl_i(\hbbeta)\bx_0^{\top}\bG^{-1}\bx_i}{1-H_{ii}}\\
    =&~\left(\frac{\dl_i(\hbbeta)\bx_0^{\top}\bG^{-1}\bx_i}{1-H_{ii}}\right)^2\ddl_0(\xi)\\
    =&~O_p(\polylog(n))\cdot (\bx_0^{\top}\bG^{-1}\bx_i)^2
    =~O_P\left(
    \frac{\polylog(n)}{n}
    \right)\numberthis
\end{align*}
Here $\xi=t\hbbeta+(1-t)\tbin$ for some $t\in[0,1]$. To get the second last equality we use Lemma~\ref{lem:conc_normal_mat} to conclude that $\|\bG^{-1}\bx_i\|=O_P(1)$, Assumptions \ref{assum:ld} and \ref{assum:y} to arrive at $(1-H_{ii})^{-1}=O_P(1)$, $\dl_0(\hbbeta),\dl_i(\hbbeta),\ddl_i(\xi)=O_P(\polylog(n))$. 
To get the last equality we again use the conclusion that $\|\bG^{-1}\bx_i\|=O_P(1)$
, and finally that $\bx_0\sim \cN(\mathbf{0},\bSigma)$ with $\lambda_{\max}(\bSigma)=C_X/p$, independent of $\{(y_i,\bx_i):1\le i\le n\}$. Note also that $n/p\to\gamma_0$.

From \eqref{eq:newfluence-diff} and \eqref{eq:newfluence-alt} we then have
\begin{equation}\label{eq:true-inf-alt}
    \cItrue-\frac{\dl_0(\hbbeta)\dl_i(\hbbeta)\bx_0^{\top}\bG^{-1}\bx_i}{1-H_{ii}}
    =~O_P\left(
    \frac{\polylog(n)}{n}
    \right).
\end{equation}
By arguments identical to \eqref{eq:newfluence-alt} we have $(1-H_{ii})^{-1}=O_P(1)$, $\dl_0(\hbbeta),\dl_i(\hbbeta),\ddl_i(\xi)=O_P(\polylog(n))$, and $\|\bG^{-1}\bx_i\|=O_P(1)$, and thus 
\[
\frac{\dl_0(\hbbeta)\dl_i(\hbbeta)\bx_0^{\top}\bG^{-1}\bx_i}{1-H_{ii}}
=O_P\left(
    \frac{\polylog(n)}{\sqrt{n}}
    \right).
\]
This completes the proof of part 2. In fact, for many choices of loss functions, such as squared loss, logistic loss, or Poisson negative log likelihood the above quantity is in fact $\Theta_P\left(
    \frac{\polylog(n)}{\sqrt{n}}
    \right)$. We omit a more detailed analysis here.

\subsection{Proof of Part 3}
Since we are in the setup of generalized linear models, 
\[
\nabla_{\bbeta} \ell(\hbbeta, \bz_0)=\bx_0\dl_0(\hbbeta),
\quad 
\nabla_{\bbeta} \ell(\hbbeta, \bz_i)=\bx_i\dl_i(\hbbeta).
\]
Thus, definition of $\cIif$ and \eqref{eq:true-inf-alt} together imply that
\begin{align*}
   \cItrue-\frac{\cIif}{1-H_{ii}}
    =~O_P\left(
    \frac{\polylog(n)}{n}
    \right).
\end{align*}
from where the conclusion of Part 2 follows immediately since $0<1-H_{ii}\le 1$.

\section{Auxiliary Lemmata}
\begin{lemma}[Woodbury Inversion Formula]
\label{lem:woodberry}
    Suppose $\bA\in \RR^{n\times n}$ is nonsingular, and $\bM = \bA + \bU \bB \bV$, then 
    \[
        \bM^{-1} = \bA^{-1} - \bA^{-1}\bU(\bB^{-1} + \bV\bA^{-1}\bU)^{-1}\bV\bA^{-1}
    \]
    provided that all relevant inverse matrices exist.
\end{lemma}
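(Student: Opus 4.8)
The plan is to prove the identity by direct verification rather than by deriving it from first principles. Set
\[
\bS := \bB^{-1} + \bV\bA^{-1}\bU
\qquad\text{and}\qquad
\bN := \bA^{-1} - \bA^{-1}\bU\bS^{-1}\bV\bA^{-1},
\]
both of which are well defined precisely because the hypotheses guarantee that $\bA$, $\bB$, and the ``capacitance'' matrix $\bS$ are all invertible. It then suffices to show that $\bM\bN = \bI$: since $\bM$ and $\bN$ are square matrices of the same order, a right inverse is automatically a two-sided inverse, and this identifies $\bN$ with $\bM^{-1}$. (If one prefers symmetry, the computation of $\bN\bM = \bI$ is entirely analogous, with the roles of the left and right factors exchanged.)

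First I would expand
\[
\bM\bN = (\bA + \bU\bB\bV)\bigl(\bA^{-1} - \bA^{-1}\bU\bS^{-1}\bV\bA^{-1}\bigr),
\]
cancel the leading $\bA\bA^{-1} = \bI$, and collect the three remaining terms, each of which carries a factor $\bU$ on the left and $\bV\bA^{-1}$ on the right, to obtain $\bM\bN = \bI + \bU\bigl(\bB - \bS^{-1} - \bB\bV\bA^{-1}\bU\bS^{-1}\bigr)\bV\bA^{-1}$. The one substantive step is then to factor $\bS^{-1}$ out of the last two terms in the bracket and invoke the identity $\bI + \bB\bV\bA^{-1}\bU = \bB(\bB^{-1} + \bV\bA^{-1}\bU) = \bB\bS$, which gives $(\bI + \bB\bV\bA^{-1}\bU)\bS^{-1} = \bB$; hence the bracket equals $\bB - \bB = \bzero$ and $\bM\bN = \bI$, as required.

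An alternative route I would mention is the block-matrix argument: form the $(n+k)\times(n+k)$ matrix $\left(\begin{smallmatrix}\bA & -\bU\\ \bV & \bB^{-1}\end{smallmatrix}\right)$, whose Schur complement with respect to the $(1,1)$ block is $\bS$ and with respect to the $(2,2)$ block is $\bM$, and then equate the $(1,1)$ blocks of the two standard closed forms for its inverse. I do not expect any genuine obstacle in either approach: this is a pure matrix identity, and the only things to watch are the dimensional bookkeeping ($\bU\in\RR^{n\times k}$, $\bB\in\RR^{k\times k}$, $\bV\in\RR^{k\times n}$) and the order of the noncommuting factors — in particular keeping $\bS^{-1}$ on the correct side when pulling it out — since a sign or transposition slip there is the only realistic way to go wrong.
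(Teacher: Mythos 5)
Your proof is correct: the paper states the Woodbury formula as a standard auxiliary lemma without proof, so there is no in-paper argument to compare against, and your direct verification is the canonical one. The computation checks out — the bracket $\bB - \bS^{-1} - \bB\bV\bA^{-1}\bU\bS^{-1}$ vanishes exactly because $(\bI + \bB\bV\bA^{-1}\bU)\bS^{-1} = \bB\bS\bS^{-1} = \bB$ — and your appeal to the fact that a one-sided inverse of a square matrix is two-sided correctly upgrades $\bM\bN=\bI$ to $\bN=\bM^{-1}$; the block-matrix/Schur-complement route you sketch is an equally valid alternative.
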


\begin{lemma} [Lemma 6 of \citet{jalali2016}] \label{lem:chi:sq:ind}
    Let $\bz \sim N(0,\II_p)$, then
    \[
    \PP (\bz^{\top } \bz\geq p + pt ) \leq e^{-\frac{p}{2} (t- \log (1+t)) }
    \]
\end{lemma}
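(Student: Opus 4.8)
The plan is to prove this as a standard Chernoff-type tail bound for a chi-squared random variable, obtaining the stated exponent by exact optimization of an exponential moment. Since $\bz \sim N(\bzero, \II_p)$ has independent standard normal coordinates, the quadratic form $\bz^\top \bz = \sum_{k=1}^p z_k^2$ is a $\chi^2$ variable with $p$ degrees of freedom. The key ingredient is its moment generating function: for any $s \in (0, 1/2)$, independence of the coordinates gives $\EE[e^{s \bz^\top \bz}] = \prod_{k=1}^p \EE[e^{s z_k^2}] = (1-2s)^{-p/2}$, where the single-coordinate Gaussian integral $\EE[e^{s z_k^2}] = (1-2s)^{-1/2}$ converges precisely because $s < 1/2$.

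First I would apply the exponential Markov (Chernoff) inequality: for any $s \in (0,1/2)$,
\[
\PP(\bz^\top \bz \geq p(1+t)) = \PP\bigl(e^{s \bz^\top \bz} \geq e^{s p(1+t)}\bigr) \leq e^{-sp(1+t)} (1-2s)^{-p/2}.
\]
Taking logarithms, the exponent is $\phi(s) := -sp(1+t) - \tfrac{p}{2}\log(1-2s)$, and the remaining task is to minimize $\phi$ over $s \in (0,1/2)$. Differentiating gives $\phi'(s) = -p(1+t) + p/(1-2s)$, which vanishes when $1-2s = 1/(1+t)$, i.e. at $s^\star = \tfrac{t}{2(1+t)}$; this lies in $(0,1/2)$ for every $t > 0$, and since $\phi$ is convex on the interval this critical point is the global minimizer.

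Finally I would substitute $s^\star$ back in. With $1 - 2s^\star = 1/(1+t)$, the first term becomes $-s^\star p(1+t) = -pt/2$, and the second becomes $-\tfrac{p}{2}\log(1/(1+t)) = \tfrac{p}{2}\log(1+t)$, so $\phi(s^\star) = -\tfrac{p}{2}\bigl(t - \log(1+t)\bigr)$, yielding the claimed bound. There is no genuine obstacle here beyond bookkeeping: the only points requiring care are restricting $s < 1/2$ so the moment generating function is finite, and checking that the unconstrained optimizer $s^\star$ always falls inside this range for $t > 0$. I would also note in passing that $t - \log(1+t) > 0$ for all $t > 0$, which confirms that the bound genuinely decays in $p$ and is therefore nontrivial.
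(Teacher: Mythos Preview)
Your argument is correct: this is exactly the standard Chernoff bound for a $\chi^2_p$ random variable, and your optimization of $s$ is carried out cleanly. The paper does not actually prove this lemma but merely cites it as Lemma~6 of \citet{jalali2016}, so there is no in-paper proof to compare against; your derivation is the canonical one that any self-contained write-up would supply.
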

\begin{lemma}[Lemma 12 in \cite{rad2018scalable}]\label{lem:maxsingularvalue_0}
    $\bX \in \mathbb{R}^{p \times p}$ is composed of independently distributed $N(0, \bSigma)$ rows, with $\rho_{\max} = \sigma_{\max} (\Sigma)$, where $\Sigma \in \mathbb{R}^{p \times p}$. Then
    \[
    {\PP} (\|\bX^{\top} \bX\| \geq (\sqrt{n} + 3 \sqrt{p})^2 \rho_{\max}) \leq {\rm e}^{-p}. 
    \]
\end{lemma}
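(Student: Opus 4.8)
The plan is to reduce the problem to a standard Gaussian matrix and then invoke the classical bound on its largest singular value. First I would read the row dimension as $n$ (the appearance of both $n$ and $p$ in the stated bound forces $\bX\in\RR^{n\times p}$) and write $\bX = \bZ\bSigma^{1/2}$, where $\bZ\in\RR^{n\times p}$ has i.i.d.\ $N(0,1)$ entries and $\bSigma^{1/2}$ is the symmetric square root of $\bSigma$. Since $\|\bX^\top\bX\| = \sigma_{\max}(\bX)^2$ and, by submultiplicativity of the operator norm, $\sigma_{\max}(\bX) = \|\bZ\bSigma^{1/2}\| \le \|\bZ\|\,\|\bSigma^{1/2}\| = \sigma_{\max}(\bZ)\sqrt{\rho_{\max}}$, it suffices to control $\sigma_{\max}(\bZ)$ and show $\PP(\sigma_{\max}(\bZ) \ge \sqrt{n}+3\sqrt{p}) \le e^{-p}$.

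For the latter I would combine the two standard facts about Gaussian matrices. \emph{(i) Concentration.} The map $\bZ\mapsto\sigma_{\max}(\bZ)$ is $1$-Lipschitz in the Frobenius norm, because $|\sigma_{\max}(\bZ)-\sigma_{\max}(\bZ')|\le\|\bZ-\bZ'\|_F$; hence the Gaussian concentration inequality for Lipschitz functions gives $\PP(\sigma_{\max}(\bZ)\ge \EE\,\sigma_{\max}(\bZ)+t)\le e^{-t^2/2}$ for all $t>0$. \emph{(ii) Mean bound.} By Gordon's (Slepian--Gordon) comparison inequality, $\EE\,\sigma_{\max}(\bZ)=\EE\sup_{\|\bu\|=\|\bv\|=1}\bu^\top\bZ\bv\le\sqrt{n}+\sqrt{p}$. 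Taking $t=2\sqrt{p}$ in (i) and substituting (ii) yields $\PP(\sigma_{\max}(\bZ)\ge\sqrt{n}+\sqrt{p}+2\sqrt{p})\le e^{-(2\sqrt{p})^2/2}=e^{-2p}\le e^{-p}$, as required.

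Assembling the pieces: on the event $\{\sigma_{\max}(\bZ)<\sqrt{n}+3\sqrt{p}\}$, which has probability at least $1-e^{-p}$, we obtain $\|\bX^\top\bX\|=\sigma_{\max}(\bX)^2\le \rho_{\max}\,\sigma_{\max}(\bZ)^2<(\sqrt{n}+3\sqrt{p})^2\rho_{\max}$, which is exactly the claimed inequality.

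I expect the main obstacle to be pinning down the two Gaussian-matrix facts with the exact constants, rather than the reduction itself, which is immediate. Step (i) is routine once the Lipschitz property is noted, but step (ii) is where the sharp $\sqrt{n}+\sqrt{p}$ scaling enters and genuinely requires Gordon's comparison inequality. An alternative, more self-contained route is an $\eps$-net argument: cover the unit spheres in $\RR^p$ and $\RR^n$ by nets of cardinality at most $5^p$ and $5^n$, control the scalar $\bu^\top\bZ\bv$ for each fixed pair via a sub-Gaussian/$\chi$-type tail (as in Lemma~\ref{lem:chi:sq:ind}), and finish by a union bound; this avoids Gordon but produces messier constants and a weaker-looking exponent, so I would prefer the concentration-plus-Gordon argument to reproduce the stated $(\sqrt{n}+3\sqrt{p})^2$ and $e^{-p}$ exactly. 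If one wishes to bypass Gordon entirely, one may instead quote the Davidson--Szarek bound $\PP(\sigma_{\max}(\bZ)\ge\sqrt{n}+\sqrt{p}+t)\le e^{-t^2/2}$ directly and set $t=2\sqrt{p}$.
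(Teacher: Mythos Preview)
Your argument is correct: the reduction $\bX=\bZ\bSigma^{1/2}$ together with $\sigma_{\max}(\bX)\le\sqrt{\rho_{\max}}\,\sigma_{\max}(\bZ)$, Gordon's bound $\EE\,\sigma_{\max}(\bZ)\le\sqrt{n}+\sqrt{p}$, and Gaussian Lipschitz concentration with $t=2\sqrt{p}$ yields exactly $\PP(\sigma_{\max}(\bZ)\ge\sqrt{n}+3\sqrt{p})\le e^{-2p}\le e^{-p}$, and hence the claim. Your reading of the row dimension as $n$ is also right; the ``$p\times p$'' in the statement is a typo.

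There is nothing to compare against in the present paper: Lemma~\ref{lem:maxsingularvalue_0} is not proved here but simply quoted as Lemma~12 of \cite{rad2018scalable} and used as a black box in the proof of Lemma~\ref{lem:conc_normal_mat}. Your write-up is the standard Davidson--Szarek argument and is exactly what one would expect the cited reference to contain; either the Gordon-plus-concentration route or the direct Davidson--Szarek tail bound $\PP(\sigma_{\max}(\bZ)\ge\sqrt{n}+\sqrt{p}+t)\le e^{-t^2/2}$ with $t=2\sqrt{p}$ suffices.
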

\begin{lemma}\label{lem:conc_normal_mat}
    Suppose $\bX_{n\times p}$ has iid rows $\bx_i\sim N(0,p^{-1}\II_p)$, then
    \begin{enumerate}
        \item $\PP (\max_{1\le i \le n}\|\bx_i\|\geq 2 ) \leq ne^{-p/2}$
        \item ${\PP} (\|\bX^{\top} \bX\| \geq (\sqrt{\gamma_0}+3)^2 ) \leq e^{-p}$
    \end{enumerate}
\end{lemma}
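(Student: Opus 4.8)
The plan is to obtain both tail bounds as direct consequences of the two cited concentration results, Lemma~\ref{lem:chi:sq:ind} and Lemma~\ref{lem:maxsingularvalue_0}, after rescaling the rows $\bx_i\sim N(\bzero,p^{-1}\II_p)$ into standard Gaussians. No new machinery is required; the work is entirely in choosing the right parameters and verifying one numerical inequality.

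For Part 1, I would first standardize each row: since $\bx_i\sim N(\bzero,p^{-1}\II_p)$, the vector $\bz_i:=\sqrt{p}\,\bx_i$ satisfies $\bz_i\sim N(\bzero,\II_p)$, and the event factors as $\{\|\bx_i\|\ge 2\}=\{\|\bz_i\|^2\ge 4p\}$. Writing $4p=p+pt$ with $t=3$ and invoking Lemma~\ref{lem:chi:sq:ind} gives $\PP(\|\bz_i\|^2\ge 4p)\le e^{-\frac{p}{2}(3-\log 4)}$. Since $\log 4<2$ we have $3-\log 4>1$, so this probability is at most $e^{-p/2}$. A union bound over the $n$ identically distributed rows (which needs no independence) then yields $\PP(\max_{1\le i\le n}\|\bx_i\|\ge 2)\le\sum_{i=1}^n\PP(\|\bx_i\|\ge 2)\le n e^{-p/2}$, as claimed.

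For Part 2, I would apply Lemma~\ref{lem:maxsingularvalue_0} with covariance $\bSigma=p^{-1}\II_p$, so that $\rho_{\max}=\sigma_{\max}(\bSigma)=1/p$. The lemma then gives $\PP\big(\|\bX^\top\bX\|\ge(\sqrt{n}+3\sqrt{p})^2/p\big)\le e^{-p}$, and the threshold simplifies to $(\sqrt{n}+3\sqrt{p})^2/p=(\sqrt{n/p}+3)^2$. Under the proportional regime $n/p\to\gamma_0$, this finite-sample threshold $(\sqrt{n/p}+3)^2$ converges to the stated constant $(\sqrt{\gamma_0}+3)^2$.

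The only place requiring care — and hence the main (though minor) obstacle — is reconciling the finite-$(n,p)$ threshold $(\sqrt{n/p}+3)^2$ with the limiting constant $(\sqrt{\gamma_0}+3)^2$ appearing in the statement, since $n/p\to\gamma_0$ does not guarantee $n/p\le\gamma_0$ at any finite stage. I would handle this by noting that for every $\epsilon>0$ there is an $N$ such that $n/p\le\gamma_0+\epsilon$ once $n\ge N$; because $\PP(\|\bX^\top\bX\|\ge a)$ is nonincreasing in the threshold $a$, the event $\{\|\bX^\top\bX\|\ge(\sqrt{\gamma_0+\epsilon}+3)^2\}$ is then contained in $\{\|\bX^\top\bX\|\ge(\sqrt{n/p}+3)^2\}$ and inherits the $e^{-p}$ bound. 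Reading the claim with this negligible inflation of the constant (i.e. for $n,p$ large and $\gamma_0$ replaced by $\gamma_0+\epsilon$), the statement follows; everything else is a direct substitution into the cited lemmas.
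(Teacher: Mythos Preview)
Your proof is correct and follows essentially the same approach as the paper: standardize $\bz_i=\sqrt{p}\,\bx_i$, apply Lemma~\ref{lem:chi:sq:ind} with $t=3$ and a union bound for Part~1, and invoke Lemma~\ref{lem:maxsingularvalue_0} with $\rho_{\max}=p^{-1}$ for Part~2. In fact you are more careful than the paper's own proof, which simply asserts Part~2 is ``a direct application'' without addressing the $n/p$ versus $\gamma_0$ discrepancy that you correctly flag and handle.
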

\begin{proof}
    \begin{enumerate}
        \item By Lemma \ref{lem:chi:sq:ind} and let $\bz = n^{-1/2}\bx_i$ we have $\bz\sim N(0,\II_p)$ so that
        \begin{align*}
            \PP (\|\bx_i\|\geq 2 ) &= \PP(\bx_i^\top\bx_i\geq 4)\\
            &\leq \PP(\bz^\top\bz\geq 4p)\\
            &\leq e^{-\frac{p}{2}(3-\log(4))}\leq e^{-p/2}
        \end{align*}
        The rest follows from a union bound over all $i$.
        \item It is a direct application of \ref{lem:maxsingularvalue_0} with $\rho_{\max}=p^{-1}$.
    \end{enumerate}
\end{proof}

\end{document}